\def\eUDRL{eUDRL}
\newcommand{\subnote}[1]{{\color{gray}{#1\:}}}
\newcommand{\prob}{\mathbb{P}}
\newcommand{\stag}[1]{#1^{\Sigma}}
\newcommand{\argmax}{\mathrm{argmax}}
\newcommand{\indicator}{\mathbf{1}}
\DeclareMathOperator*{\ev}{\mathbb{E}}
\newcommand{\Traj}{\mathrm{Traj}}
\newcommand{\Seg}{\mathrm{Seg}}
\newtheorem{definition}{Definition}
\numberwithin{equation}{section}
\title{Upside-Down Reinforcement Learning Can Diverge \\ in Stochastic Environments With Episodic Resets}
\author{
Miroslav {\v{S}}trupl \thanks{Correspondence to \href{mailto:struplm@idsia.ch}{\texttt{struplm@idsia.ch}}}\enspace$^{1,2,3}$ \\
\And
Francesco Faccio $^{1,2,3}$ \\
\And
Dylan R.~Ashley $^{1,2,3}$ \\
\AND
J{\"{u}}rgen Schmidhuber $^{1,2,3,4,5}$ \\
\And
Rupesh Kumar Srivastava $^{5}$ \\
\AND
{\normalfont $^1$ The Swiss AI Lab IDSIA, Lugano, Switzerland} \\
$^2$ Universit{\`{a}} della Svizzera italiana (USI), Lugano, Switzerland \\
$^3$ Scuola universitaria professionale della Svizzera italiana (SUPSI), Lugano, Switzerland \\
$^4$ AI Initiative, King Abdullah University of Science and Technology (KAUST), Thuwal, Saudi Arabia \\
$^5$ NNAISENSE, Lugano, Switzerland \\
}
\begin{document}

\maketitle

\begin{abstract}
Upside-Down Reinforcement Learning (UDRL) is an approach for solving RL problems that does not require value functions and uses \emph{only} supervised learning, where the targets for given inputs in a dataset do not change over time \cite{schmidhuber2020reinforcement,srivastava2021training}.
Ghosh et al.~\cite{ghosh2020learning} proved that Goal-Conditional Supervised Learning (GCSL)---which can be viewed as a simplified version of UDRL---optimizes a lower bound on goal-reaching performance. 
This raises expectations that such algorithms may enjoy guaranteed convergence to the optimal policy in arbitrary environments, similar to certain well-known traditional RL algorithms.
Here we show that for a specific \emph{episodic} UDRL algorithm (eUDRL, including GCSL), this is not the case, and give the causes of this limitation.
To do so, we first introduce a helpful rewrite of eUDRL as a recursive policy update.
This formulation helps to disprove its convergence to the optimal policy for a wide class of stochastic environments.
Finally, we provide a concrete example of a very simple environment where eUDRL diverges.
Since the primary aim of this paper is to present a negative result, and the best counterexamples are the simplest ones, we restrict all discussions to finite (discrete) environments, ignoring issues of function approximation and limited sample size. 
\end{abstract}

\keywords{
upside-down reinforcement learning,
command-conditioned policies,
goal-conditioned supervised learning,
reinforcement learning
}

\acknowledgements{This work was supported by the European Research Council (ERC, Advanced Grant Number 742870), the Swiss National Supercomputing Centre (CSCS, Project s1090), and by the Swiss National Science Foundation (Grant Number  200021\_192356, Project NEUSYM). We also thank both the NVIDIA Corporation for donating a DGX-1 as part of the Pioneers of AI Research Award and IBM for donating a Minsky machine.}

\startmain 

\section{Introduction}

Upside-Down RL (UDRL) \cite{schmidhuber2020reinforcement} was introduced as an approach for solving RL problems that does not require value functions and uses only supervised learning.
In UDRL the agent takes (besides the state) an extra command input $(h,g)$. The components of the command input are referred to as the horizon $h$ and goal $g$.
The UDRL's command can be interpreted in many ways, e.g., ``achieve more then $g$ cumulative reward in at most $h$ steps".
However, in this paper we will discuss just following simple interpretation: ``reach goal $g$ in $h$ number of steps" (see definition \ref{de:CE} for more details).
Goal-conditioned supervised learning~\cite{ghosh2020learning} (GCSL) mainly differs from UDRL by restricting $h$ to some fixed horizon---thus leaving just the goal $g$ as input. Therefore GCSL could be understood as a special case of UDRL.
Originally, UDRL was introduced in many flavors: stochastic or deterministic environments, with episodic or single-life setting.
Here we aim to analyze the episodic setting for UDRL (\eUDRL{}). A discussion of \emph{single-life} UDRL \cite{schmidhuber2020reinforcement} is outside the scope of this paper.
Specifically, we will investigate the version of \eUDRL{} as described in \cite{srivastava2021training} up to some simplifications, e.g., we will not consider prioritized replay in order to simplify the analysis.
The results of our \eUDRL{} analysis also apply to GCSL.

\section{Background: Command Extension}
In this paper we will assume all random variable maps are defined on a measure space $(\Omega,\mathcal{F},\prob)$, with $\prob$ being a probability measure. Also, we will be dealing exclusively with discrete random variables.
This implies that all probability distributions can be expressed
as densities with respect to the suitable power of an arithmetic measure: these densities are essentially probabilities and so integration corresponds to summation.

A Markov Decision Process (MDP) is a tuple $\mathcal{M}=(\mathcal{S},\mathcal{A},p_T,\mu_0,r)$ \cite{puterman2014markov}, 
where we assume $\mathcal{S} \subset \mathbb{Z}^{n_S}$ and $\mathcal{A} \subset \mathbb{Z}^{n_A}$
to be finite state and action spaces. 
$p_T(s'|s,a) = \prob( S_{t+1} = s' \mid S_{t} = s,A_{t} = a )$ 
denotes the transition density (the subscript $T$ is an abbreviation of \emph{transition}, and does not refer to a random variable).
$\mu_0(s) = \prob( S_0=s )$ 
denotes the initial distribution.
$r(s',s,a)$ denotes the deterministic reward function, i.e., the reward at time $t$ is given by
$R_t = r(S_{t},S_{t-1},A_{t-1})$. 
We assume no discounting.
Further, we define the return $G_t = \sum_{k \in \mathbb{N}_0} R_{t+k+1}$ from time $t$.
An agent is determined by its (stochastic) policy $\pi(a|s) = \prob(A_t=a \mid S_t=s)$
, which is the
conditional probability of choosing an action $a$, given that the environment is in a state $s$.
The performance of an agent following a policy is measured by means of 
state $V^{\pi}(s) = \ev [G_t \mid S_t = s;\pi]$ and
action $Q^{\pi}(s,a) = \ev [G_t \mid S_t = s,A_t = a;\pi]$ value functions.
However, the symbols $\pi$, $V^{\pi}$, $Q^{\pi}$ will be used solely
here to denote policies and values of a special class of MDPs
called \textbf{Command Extensions}, which are introduced below.

The aim of training a \eUDRL{} agent is to make the agent better at fulfilling commands.
In order to discuss \eUDRL{}, where the agent takes an extra command input to produce an action, we simply extend the state space by the command to be able to view a
\eUDRL{} agent as an ordinary agent on a slightly bigger MDP. This motivates the following definition of Command Extension:
\begin{definition} \label{de:CE}
(Command Extension (CE))
of an MDP $\mathcal{M}=(\mathcal{S}, \mathcal{A}, p_T, r, \mu_0)$
is the MDP $\bar{\mathcal{M}} = (\bar{\mathcal{S}}, \mathcal{A}, \bar{p}_T, \bar{r}, \bar{\mu}_0,\rho)$, 
where:
\begin{itemize}
\item
$\rho:S \rightarrow \mathcal{G}$
is a goal map (usually just a projection on some state components).
We refer to $\mathcal{G} \subset \mathbb{Z}^{n_G}$ as the goal set.

\item
$\bar{\mathcal{S}} := \mathcal{S}\times\bar{\mathbb{N}}_0\times{\mathcal{G}}$, where for all 
$\bar{s} = (s,h,g) \in \bar{\mathcal{S}}$, $h$ has the meaning of remaining horizon and $g \in 
\mathcal{G}$ stands for a goal.
$\bar{\mathbb{N}}_0 := \{h \in \mathbb{N}_0 \mid h \leq N\}$ denotes a prefix of natural numbers, where $N$ denotes maximum remaining horizon.
The set of absorbing states $\bar{S}_A := \{ (s,h,g) \in \bar{\mathcal{S}} \mid h = 0 \}$ is formed
by all states with remaining horizon 0. The set of transition states is denoted $\bar{S}_T := \bar{\mathcal{S}} \setminus \bar{S}_A$.

\item
The initial distribution of $\bar{\mathcal{M}}$ is given by
$$\bar{\mu}_0(\bar{s}) := \bar{\mu}_0(s,h,g) := \prob(H_0=h ,G_0=g \mid S_0=s)\mu_0(s).$$

\item
The transition kernel is given by the density $\bar{p}_T$ which is defined as follows (we described just moves from transient states and only 
possibly non-zero cases, $\forall a \in \mathcal{A}$):
$$
\forall (s,h,g) \in \bar{\mathcal{S}}_T \: \forall s'\in \mathcal{S}:
\bar{p}_T ( (s',h-1,g) \mid (s,h,g), a )
= p_T(s' \mid s,a)
$$
We see that the remaining horizon decreases by $1$ till $0$ where an absorbing state
is entered. The goal once initialized is held fixed. Note that for the remaining horizon $h>0$, the transition dynamics in the original MDP state component is the same as in the original MDP.
\item
The nonzero reward is issued just when transiting to an absorbing state for the first time and is given by
$(\forall s,s'\in \mathcal{S}, \forall g\in\mathcal{G}, a\in \mathcal{A}):
\bar{r}((s',0,g),(s,1,g),a) := \indicator\{\rho(s') = g\}.$
\end{itemize}
\end{definition}

Note that in order to construct a command extension one has to supply (besides the original MDP) also 
$p_{H_0,G_0 \mid S_0}$
: the initial horizon and goal conditional.\footnote{CE also allows for
goal to be related to the return of the original MDP $\mathcal{M}$. This is
possible by extending the $\mathcal{M}$ state with a component that accumulates rewards.}

\paragraph{CE value functions \& optimal \eUDRL{} agents.}
The reward is defined such that it promotes increasing the probability of fulfilling commands. This can be observed by computing the value functions (see Appendix \ref{ap:CEvalues}
for details). \\ $\forall (s,h,g) \in \bar{S}_T, \forall a \in \mathcal {A}:$
\small
\begin{equation*}
    Q^{\pi}((s,h,g),a) =
\prob( \rho(S_{t+h})=g|S_{t}=s,H_{t}=h,G_{t}=g,A_t=a;\pi), \quad
V^{\pi}(s,h,g) = 
\prob(\rho(S_{t+h})=g|S_t=s,H_t=h,G_t=g;\pi ).
\end{equation*}
\normalsize

Thus, if we interpret "better at fulfilling commands" in the sense of an increased probability as \eUDRL{} claims, then the solution of the \eUDRL{} problem simply translates into finding an optimal agent for the Command Extension of the original MDP. This corresponds to the solution of the finite horizon reinforcement learning problem \cite{puterman2014markov}.\footnote{Notice that CE rewards and even values are bounded by 1.}
However, \eUDRL{} is not formulated in this way (as RL on corresponding CE) and is rather given explicitly as an iterative algorithm. 

\paragraph{Segment distribution}
The \eUDRL{} algorithm, which we will further describe, makes use of samples of trajectory segments. In order to be able to conduct a precise analysis, we have to first define the segment distribution that will be used. 
To be concise here, we leave the precise definition of the segment distribution $d_{\Sigma}^{\pi}$ and its properties to Appendix \ref{ap:SegmentDist}. Here we just mention some key facts which will be used in the later sections. By segments we mean continuous chunks of trajectories. Because of the remaining horizon and goal transition
dynamics of CE, a segment is given by its length $l(\sigma)$, the first remaining horizon $h_0^{\sigma}$ and the goal $g_0^{\sigma}$, the $l(\sigma)+1$ original MDP states $s_0^{\sigma},\ldots,s_{l(\sigma)}^{\sigma}$ and
the $l(\sigma)$ actions
$a_0^{\sigma},\ldots,a_{l(\sigma)-1}^{\sigma}$. In further writing
we will identify segments with such tuples
$\sigma=(l(\sigma),s_0^{\sigma},h_0^{\sigma},g_0^{\sigma},a_0^{\sigma},
s_1^{\sigma},a_1^{\sigma},\ldots,s_{l(\sigma)}^{\sigma})$.
The space of all such tuples will be denoted by $\Seg$. Formally,
we will denote by $\Sigma: \Omega \rightarrow \Seg$
(with components $\Sigma = (l(\Sigma),\stag{S}_0,\stag{H}_0,\stag{G}_0,\stag{A}_0,
\stag{S}_1,\stag{A}_1,\ldots,\stag{S}_{l(\Sigma)})$)
a random variable map with distribution $d_{\Sigma}^{\pi_n}$.
In appendix \ref{ap:SegmentDist} we show the following key properties of $d_{\Sigma}^{\pi_n}$ which will be used later
$(\forall\, 0 \leq i \leq k)$:
\small
\begin{equation*}
\prob(\stag{A}_i=a_i|\stag{S}_i=s_i,\ldots,\stag{S}_0=s_0,\stag{H}_0=h_0,\stag{G}_0=g,l(\Sigma)=k;\pi_n)
=\pi_n(a_i|s_i,h_0-i,g),
\tag{\ref{eq:segactions}}
\end{equation*}
\begin{align*}
\prob(\stag{S}_i=s_i|\stag{S}_{i-1}=s_{i-1},\stag{A}_{i-1}=a_{i-1},\ldots,\stag{S}_0=s_0,\stag{H}_0=h_0,\stag{G}_0=g_0,l(\Sigma)=k;\pi_n)
&=
p_T(s_i|s_{i-1},a_{i-1})
\tag{\ref{eq:segtransitions}}
\\
&=
\prob(\stag{S}_i=s_i|\stag{S}_{i-1}=s_{i-1},\stag{A}_{i-1}=a_{i-1}).
\end{align*}
\normalsize

\paragraph{\eUDRL{} as an iterative algorithm.}
\eUDRL{} is an iterative algorithm which 
starts with some initial policy $\pi_0$ and
generates a sequence of policies $(\pi_n)$.
Each iteration consists of two steps: in the first step a batch of trajectories is generated using the current policy $\pi_n$;
in the second step, \emph{segments} $\sigma \sim d_{\Sigma}^{\pi_n}$ of trajectories are sampled from the batch and
a new policy $\pi_{n+1}$ is fitted (by maximizing likelihood using supervised learning) to some action conditional of $d_{\Sigma}^{\pi_n}$. Precisely,
$$
\pi_{n+1} := \argmax_{\pi} \ev_{\sigma=
(l(\sigma),s_0^{\sigma},h_0^{\sigma},g_0^{\sigma},a_0^{\sigma},\ldots,s_{l(\sigma)}^{\sigma})
) \sim d_{\Sigma}^{\pi_n}} \log(\pi(a_0^{\sigma} \mid s_0^{\sigma},l(\sigma),\rho(s_{l(\sigma)}^{\sigma}))).
$$
The way that segments sampled from $d_{\Sigma}^{\pi_n}$ are used for learning is similar to that in Hindsight Experience Replay (HER) \cite{andrychowicz2018hindsight}.
Intuitively, given a segment sample $\sigma \sim d_{\Sigma}^{\pi_n}$, the first action $a_0^{\sigma}$ of the segment might be good for command $(l(\sigma),\rho(s_{l(\sigma)}^{\sigma}))$ which gets realized at the end of the segment independently of what command was
actually intended at the segment beginning $(h_0^{\sigma},g_0^{\sigma})$.
Thus a new policy $\pi_{n+1}$ is fitted using supervised learning to the conditional probability density $p_{\stag{A}_0 \mid \stag{S}_0, l(\Sigma), \rho(\stag{S}_{l(\sigma)}); \pi_n}$, i.e.,
$\pi_{n+1}(a \mid (s,h,g)) \doteq
\prob(\stag{A}_0=a \mid \stag{S}_0=s,l(\Sigma)=h,\rho(\stag{S}_{l(\sigma)})=g;\pi_n)
$, where $\doteq$ denotes equality up to fitting error.
Since we will not consider function approximation here, we can assume equality.

A good property of this iteration is that it is based purely on supervised learning (it does not feature value functions). The motivation behind this iteration was to make next policy ``better" than the previous one, but we will show later in this paper (Sections \ref{sec:nonoptimality} and \ref{sec:demo}) that it guarantees neither convergence to the optimum nor monotonic improvement of the policy in stochastic episodic environments (at least in sense of CE value functions).
It is interesting to note that \cite{schmidhuber2020reinforcement}
reserved the described iteration just for deterministic environments (where the algorithm converges) unlike
\eUDRL{}'s practical implementations \cite{srivastava2021training} and GCSL that were tested on stochastic environments.

\section{\eUDRL{} Recursion Rewrite}

As explained above, in every iteration, the \eUDRL{} algorithm fits a new policy $\pi_{n+1}$  to 
density $p_{\stag{A}_0|\stag{S}_0,l(\Sigma),\rho (\stag{S}_{l(\Sigma)}); \pi_n}$.
Since we are assuming no function approximation, we get ($\forall a\in \mathcal{A},(s,h,g)\in \bar{S}_T$):
\begin{equation}
\begin{aligned}
\pi_{n+1}(a|s,h,g)
&=
\prob(\stag{A}_0=a|\stag{S}_0=s,l(\Sigma)=h,\rho(\stag{S}_{l(\Sigma)}) = g; \pi_n)
\\
&=
\frac{
\prob(\rho (\stag{S}_{l(\Sigma)})=g | \stag{A}_0=a,\stag{S}_0=s,l(\Sigma)=h; \pi_n)
\prob(\stag{A}_0=a|\stag{S}_0=s,l(\Sigma)=h; \pi_n)
}
{
\prob(\rho (\stag{S}_{l(\Sigma)})=g|\stag{S}_0=s,l(\Sigma)=h; \pi_n)
}
\\
\Big(
&=
\frac{
Q_{A}^{\pi_n,g}(s,h,a)
\pi_{A,n} (a|s,h)
}
{
\sum_{a\in\mathcal{A}} Q_{A}^{\pi_n,g}(s,h,a) \pi_{A,n} (a|s,h)
}
\Big).
\end{aligned}
\label{eq:recursion}
\end{equation}
One could immediately see a correspondence with Reward-Weighted Regression "multiply by Q" recursion \cite{strupl2021reward} --- see the last rewrite in the parentheses. Although the
quantities in the numerator $\pi_{A,n}$ and $Q_{A}$ are not the actual policies and Q-values, we will denote them by similar
symbols to resemble this correspondence. Let us define the
``average" $Q$-value
$
Q_{A}^{\pi_n,g}(s,h,a) := 
\prob(\rho (\stag{S}_{l(\Sigma)})=g | \stag{A}_0=a, \stag{S}_0=s,l(\Sigma)=h; \pi_n)
$
and the
``average" policy
\begin{equation}
\begin{aligned}
\pi_{A,n} (a|s,h) &:= \prob(\stag{A}_0=a|\stag{S}_0=s,l(\Sigma)=h; \pi_n)
=
\sum_{h' \in \mathbb{N}_0,g'\in\mathcal{G}}
\prob(\stag{A}_0=a, \stag{H}_0=h', \stag{G}_0=g' |\stag{S}_0=s,l(\Sigma)=h; \pi_n)
\\
&=
\sum_{h'\geq h,g'\in\mathcal{G}}
\prob(\stag{A}_0=a| \stag{H}_0=h', \stag{G}_0=g', \stag{S}_0=s,l(\Sigma)=h; \pi_n)
\prob( \stag{H}_0=h', \stag{G}_0=g' |\stag{S}_0=s,l(\Sigma)=h; \pi_n)
\\
&=
\sum_{h'\geq h,g'\in\mathcal{G}}
\pi_n (a|h',g',s)
\prob( \stag{H}_0=h', \stag{G}_0=g' |\stag{S}_0=s,l(\Sigma)=h; \pi_n)
.\label{eq:apolicy}
\end{aligned}
\end{equation}
First, for $h'<h$ the summed expression is actually zero (a segment must fit into a trajectory,
thus the remaining horizon at the begin of a segment $\stag{H}_0$ must be always greater or equal
then the length of the segment).
Finally, the fact that $\prob(\stag{A}_0=a| \stag{H}_0=h', \stag{G}_0=g', \stag{S}_0=s,l(\Sigma)=h; \pi_n)
 = \pi_n(a|h',g',s)$ is just equation \ref{eq:segactions}.
The ``average" in names refers to the fact that some of the CE state components are missing, thus in the case of average action-value function $Q_{A}^{\pi_n,g}$ we have one common value for all goals $\stag{G}_0$ etc.
For the average policy $\pi_{A,n}$ it is even more evident as the average policy is obtained through marginalization of the goal and horizon from current policy $\pi_n$.

\section{\eUDRL{} Non-Optimality in Stochastic Environments}\label{sec:nonoptimality}

The fact that \eUDRL{} hardly converges in stochastic environments follows directly from the above introduced
\eUDRL{} recursion rewrite.
We will state it formally in the following lemma, which we must motivate first.
Assume a scenario with one optimal policy and where the recursion has a limit (we would
like it to be the optimal policy). 
As a consequence $(\pi_{A,n})$ approaches another limit and also does $Q_A$. Thus we can imagine that
up to some arbitrary small errors $\pi_n, \pi_{A,n}, Q_{A}^{\pi_n,g}$ are fixed at their limits
(for sufficiently large $n$).
Suppose there are two goals for which their corresponding optimal policies have disjoint supports
at some state (simply for this two goals there does not exists a policy which is
optimal for both of them). Then since the average policy shares the same value for all goals, the difference has to be accounted in the multiplication by $Q_A$. Thus the average Q has to be
very selective. This is actually true just in a deterministic environment.
As the stochasticity of the environment increases, the action value functions tend to be more
and more flat. This inevitably causes \eUDRL{}'s inability to reach the optimal policy.
Before stating the lemma, notice that (using \eqref{eq:segtransitions}):
$
Q_{A}^{\pi_n,g}(s,1,a) := 
\prob(\rho (\stag{S}_{l(\Sigma)})=g | \stag{A}_0=a, \stag{S}_0=s,l(\Sigma)=1; \pi_n)
=
\sum_{s'\in \rho^{-1}(\{g\})}
\prob(\stag{S}_{l(\Sigma)}=s' | \stag{A}_0=a, \stag{S}_0=s,l(\Sigma)=1; \pi_n)
=
\sum_{s'\in \rho^{-1}(\{g\})}
p_T (s'|a,s)
$
is policy independent (because we are at horizon 1).

\begin{restatable}[]{lemma}{nonoptproof}\label{le:nonoptproof}
(\eUDRL{} insensitivity to desired goal input at remaining horizon 1)
Let us have an original MDP $\mathcal{M}=(\mathcal{S}, \mathcal{A}, p_T, r, \mu_0)$ and its command extension $\bar{\mathcal{M}}=(\bar{\mathcal{S}}, \mathcal{A}, \bar{p}_T, \bar{r}, \bar{\mu}_0,\rho)$, 
such that there exists a state $s \in \mathcal{S}$ and two goals $g_0\neq g_1$, $g_0,g_1 \in \mathcal{G}$ such that 
$M_0 := \argmax_{a\in \mathcal{A}} Q^*((s,1,g_0),a)$ and $M_1 := \argmax_{a\in \mathcal{A}} Q^*((s,1,g_1),a)$ (optimal policy supports for $g_0,g_1$)
have empty intersection $M_0 \cap M_1 = \emptyset$.
Assume
$
Q_{A}^{\pi_n,g_i}(s,1,a) \geq q_i(1 - \delta)
$ where delta $\delta >0$ and $q_i := \max_a Q_{A}^{\pi_n,g_i}(s,1,a)$.
(It is useful to assume $\delta$ to be the tightest possible of such a bound.)
Then, when $\delta < 1$ (stochastic environment), the sequence $(\pi_n)$ of policies
produced by \eUDRL{} recursion cannot tend to the optimal policy set.
\end{restatable}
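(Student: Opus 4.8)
The plan is to work entirely at remaining horizon $1$, where the recursion \eqref{eq:recursion} becomes fully explicit. First I would record two structural facts. Since $Q_{A}^{\pi_n,g}(s,1,a)=\sum_{s'\in\rho^{-1}(\{g\})}p_T(s'|s,a)$ is policy-independent (as noted just before the lemma), the quantities $Q_i(a):=Q_{A}^{\pi_n,g_i}(s,1,a)$ and $q_i:=\max_a Q_i(a)$ do not depend on $n$. The disjointness hypothesis $M_0\cap M_1=\emptyset$ forces $q_0,q_1>0$, since $q_i=0$ would give $Q_i\equiv 0$, hence $M_i=\mathcal{A}$ and a nonempty intersection. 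Consequently the stochasticity assumption $\delta<1$ is exactly the statement that $Q_i(a)\ge q_i(1-\delta)>0$ for \emph{every} action $a$ and both $i$; this is the precise sense in which the action-value is ``flat.'' The second fact is that the average policy $w_n(a):=\pi_{A,n}(a|s,1)$ carries no goal dependence, since the goal is marginalized out in \eqref{eq:apolicy}, and that it is a genuine probability vector, $\sum_a w_n(a)=1$. Thus a single $w_n$ simultaneously drives the recursion for both goals.

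I would then argue by contradiction, assuming $(\pi_n)$ tends to the optimal policy set. Optimality at $(s,1,g_i)$ places all mass on $M_i$, so $\pi_n(a|s,1,g_0)\to 0$ for every $a\notin M_0$, and likewise for $g_1$. Fix any $a\in M_1$; since $M_0\cap M_1=\emptyset$ we have $a\notin M_0$, hence $\pi_n(a|s,1,g_0)\to 0$. Plugging into the $g_0$-recursion and bounding the normalizer from above by $Z_0^{n-1}=\sum_{a'}Q_0(a')w_{n-1}(a')\le q_0$, while using $Q_0(a)\ge q_0(1-\delta)$, yields the lower bound $\pi_n(a|s,1,g_0)\ge(1-\delta)\,w_{n-1}(a)$, so $w_{n-1}(a)\to 0$ for every $a\in M_1$. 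I then feed the \emph{same} weights into the $g_1$-recursion, bounding its normalizer from below by $Z_1^{n-1}\ge q_1(1-\delta)$ and using $Q_1(a)\le q_1$, to get $\pi_n(a|s,1,g_1)\le (1-\delta)^{-1}w_{n-1}(a)\to 0$ for each $a\in M_1$. Summing over the finite set $M_1$ gives $\sum_{a\in M_1}\pi_n(a|s,1,g_1)\to 0$, which contradicts the fact that convergence to the optimal set for $g_1$ forces this mass to tend to $1$.

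The heart of the argument, and the only genuinely load-bearing step, is the interplay between the goal-independence of $w_n$ and the strict positivity $Q_i(a)\ge q_i(1-\delta)>0$ supplied by $\delta<1$: once the common weights $w_{n-1}$ are driven to zero on $M_1$ by the $g_0$ iteration, the $g_1$ iteration cannot revive them, because in a stochastic environment it may only reweight $w_{n-1}$ by the bounded, strictly positive factor $Q_1$. The remaining steps are routine, namely the two-sided control of the normalizers $Z_i^{n-1}$ between $q_i(1-\delta)$ and $q_i$, and the fact that $w_n$ is a probability vector. I expect the only points needing care to be well-definedness of the conditionals in \eqref{eq:recursion} (they require $\prob(\stag{S}_0=s,l(\Sigma)=1;\pi_n)>0$, i.e.\ that $(s,1)$ is reachable, which also guarantees $\sum_a w_n(a)=1$), and pinning down precisely what ``tends to the optimal policy set'' means for the two relevant conditionals, since it is exactly the vanishing of the off-support mass that the contradiction exploits.
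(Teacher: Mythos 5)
Your proposal is correct and follows essentially the same route as the paper's proof: argue by contradiction from the recursion rewrite \eqref{eq:recursion} at horizon $1$, exploit the fact that the average policy $\pi_{A,n}(\cdot|s,1)$ is shared between the two goals, and use the two-sided bounds $q_i(1-\delta)\le Q_A^{\pi_n,g_i}(s,1,a)\le q_i$ together with the corresponding bounds on the normalizers to show that near-optimality for $g_0$ and for $g_1$ impose incompatible demands on that common average policy. The only differences are cosmetic: the paper extracts the contradiction from ratio inequalities on the aggregate masses $\beta_i=\sum_{a\in M_i}\pi_{A,n}(a|s,1)$ (and must separately dismiss the edge cases $\beta_i\in\{0,1\}$), whereas your limit-based bookkeeping on individual actions in $M_1$ avoids any division by $\beta_i$ and also makes explicit the positivity of $q_0,q_1$ and the reachability caveat that the paper uses only implicitly.
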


A proof of the lemma is included in Appendix \ref{ap:nonoptproof}. Our aim 
(in this example of bad \eUDRL{} behaviour) was not to be exhaustive but rather to illustrate
what can go wrong.
Evidently the lemma does not cover the case where the optimal policy supports happen to be disjoint just for horizons $> 1$.
Further since the \eUDRL{} computes the average policy also by averaging through different horizons (not just goals), there could be errors attributed
to insensitivity to the remaining horizon part of the command. These are not used in the lemma.
Obviously, there can be scenarios where the situation described in the lemma does not
hold, but one can (motivated by the recursion rewrite \eqref{eq:recursion}) construct more robust counterexamples (allowing for different
horizon then 1 etc.) with more complicated proofs. 

\section{Demonstration}\label{sec:demo}
The situation from the lemma is demonstrated in the following, trivial, horizon 1
example. Let us have an original MDP $\mathcal{M} = (\mathcal{S},\mathcal{A},p_T,\mu_0,r)$, where the state and action spaces are $\mathcal{S} = \{0,1\}$ and
$\mathcal{A} = \{0,1\}$ with initial state fixed at
$s_0 = 0$.
The command extension (CE) of $\mathcal{M}$, $\bar{\mathcal{M}}$, bounds the remaining horizons by $N=1$. The initial remaining horizon
is fixed at $h_0 = 1$.
The goal set is $\mathcal{G} := \mathcal{S}$, $\rho:=\mathrm{id}_{\mathcal{S}}$ ($\mathrm{id}_{\mathcal{S}}$ is the identity map on $\mathcal{S}$), and the initial goal is 
 uniformly distributed.
Since the original MDP $\mathcal{M}$ has only one initial state $s_0=0\in \mathcal{S}$ and the initial remaining horizon is always $h_0 = 1$ it suffices to
describe the transition kernel $p_T$ just from state $0$.
When action $a_0 = 0$ is chosen, $\mathcal{M}$ remains in $0$ with probability $\alpha$
and transits to $1$ with probability $1-\alpha$.
When action $a_0 = 1$ is chosen, $\mathcal{M}$ remains in $0$ with probability $1-\alpha$
and transits to $1$ with probability $\alpha$.
Thus the stochasticity of the environment is controlled by the parameter $\alpha \in [0.5,1]$, where $\alpha = 1$ forces deterministic transitions, while $\alpha = 0.5$ means that actions do not affect transitions at all (the transition kernel becomes uniform).
The CE has only two initial states: $(0,1,0)$ ($\in \bar{\mathcal{S}} = \mathcal{S}\times \mathbb{N}_0 \times \mathcal{G}$) and $(0,1,1)$ differing only in the goal component.
These are also its only transient states, thus it suffices to evaluate values and
policies only for them.

In Figure~\ref{fig:demo} (on the left) we plot the RMSVE between $V^{\pi_n}$ and $V^*$ and (in the center) the supremum distance between $\pi_n$ and $\pi^*$.
We can see that after the first iteration everything remain constant 
up to slight deviations due to the limited sample size (for this trivial example everything can be verified by hand computations as done in Appendix \ref{ap:demonotes}). Moreover the RMSVE and the supremum distance do not approach 0
anymore, e.g., for $\alpha  = 0.6$ (heavy stochasticity) the supremum distance to the
optimal policy remains above $0.3$.
This error, as proved, is inherent to \eUDRL{}/GCSL methods and cannot be alleviated
by increasing the sample size or the number of iterations, unlike classical RL methods (applied to CE $\bar{\mathcal{M}}$), which are guaranteed to converge to the global optimum.
In addition, Figure~\ref{fig:demo} (right) shows the GCSL goal reaching objective $J(\pi_n) = \sum_{\bar{s}\in \bar{\mathcal{S}}} V^{\pi_n}(\bar{s})\bar{\mu}_0(\bar{s})$. In this example
$\alpha=0.6$ and the two different initial conditions are chosen. We can observe
a huge decrease in $J(\pi_n)$ when starting from the optimal policy, thus there
is no monotonic improvement in $J(\pi_n)$.
The source code for an implementation of this example is available at \url{https://github.com/struplm/UDRL-GCSL-counterexample.git}
\begin{figure}
    \centering
    \includegraphics[width=6cm]{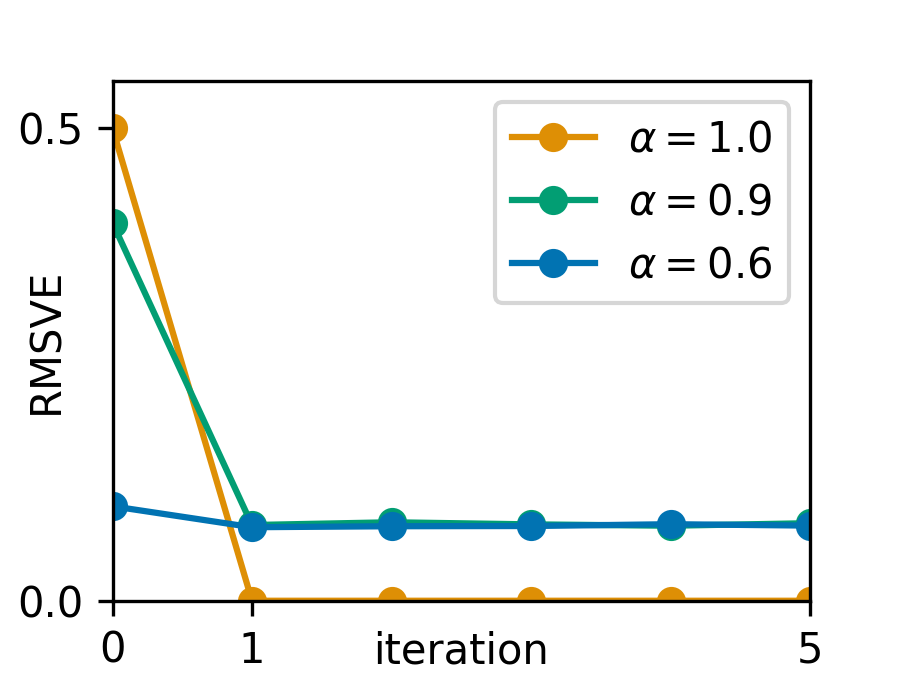}
    \includegraphics[width=6cm]{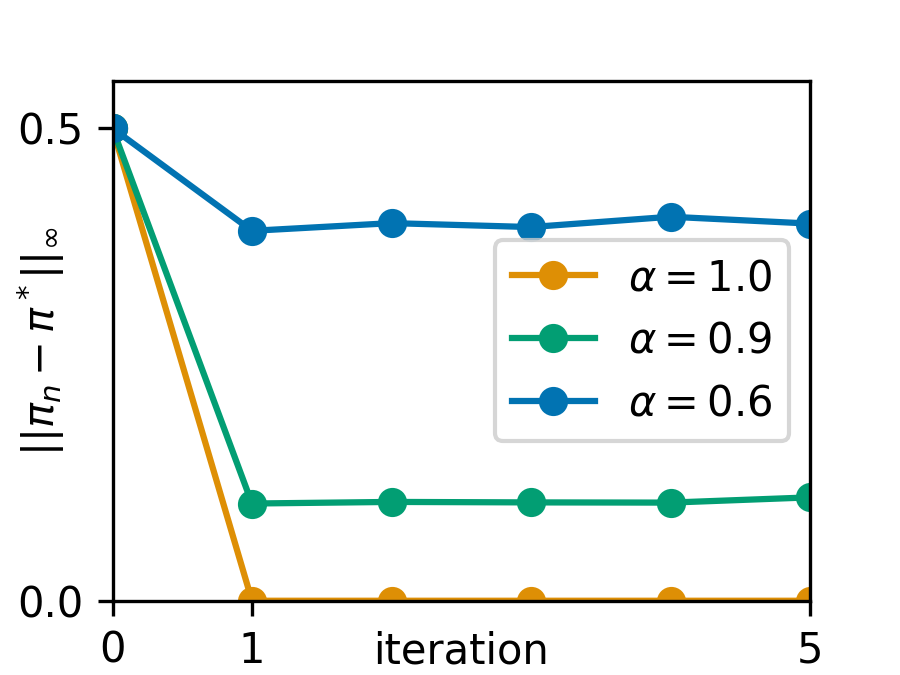}
    \includegraphics[width=6cm]{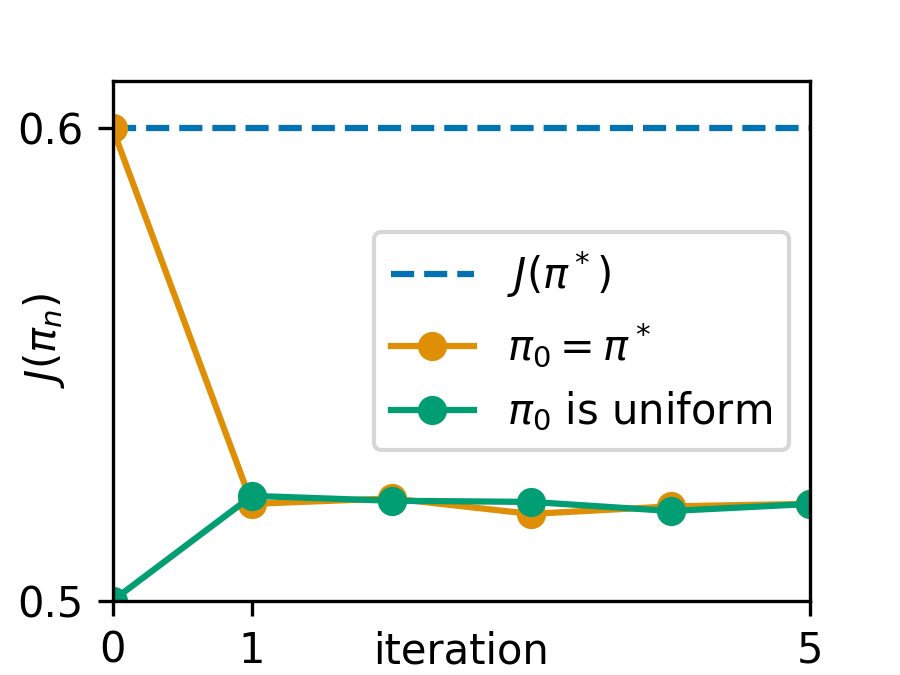}
\caption{
(left and center) Root mean square value error (RMSVE) between current and optimal value and
supremum distance of current policy $\pi_n$ to the optimal policy $\pi^*$ for various values of $\alpha$,
(right) GCSL goal reaching objective $J(\pi_n)$ with $\alpha=0.6$ and two initial conditions
$\pi_0 = \pi^*$ (the optimal policy) and $\pi_0$ uniform. A batch of 10000 trajectories (with length 1) was generated per each \eUDRL{} iteration.
}
    \label{fig:demo}
\end{figure}

\section{Conclusion}
We presented the concept of Command Extension of an MDP (see Definition~\ref{de:CE})
which, together with the formal definition of Segment Distribution (see Appendix~\ref{ap:SegmentDist}), allows for theoretical investigation of iterative, episodic variants of Upside-Down Reinforcement Learning 
algorithms (\eUDRL{}/GCSL).
A key result is the \eUDRL{} recursion rewrite \eqref{eq:recursion}
which helps to understand causes of non-optimal behavior of the aforementioned 
algorithms.
We disproved eUDRL's convergence to the optimum for quite a large class of stochastic
environments in Lemma~\ref{le:nonoptproof}.
A demonstration of non-optimal behavior was also provided through a concrete example.
Besides showing convergence to a non-optimal policy, the example also demonstrates
that there is no guarantee for monotonic improvement. 

Our analysis  focuses on the episodic setting (\eUDRL{}). A discussion of the highly promising \emph{single-life} UDRL \cite{schmidhuber2020reinforcement} is outside the scope of this work.
The fact that \eUDRL{} converges in \emph{deterministic}  environments is trivial to show, and this convergence was one of
the main motivations behind introducing the UDRL concept \cite{schmidhuber2020reinforcement}. The convergence of \eUDRL{} in the deterministic GCSL sub-case was also commented on   \cite{ghosh2020learning}.
Re-labeling history \ref{eq:recursion} causes \eUDRL{}/GCSL to fail to find an optimal policy in stochastic environments.
This result applies to certain existent implementations \cite{srivastava2021training,ghosh2020learning} that nevertheless produce useful results in practice.
It remains an open question to characterize the conditions under which these implementations learn policies that are useful despite being non-optimal.

\bibliographystyle{plain} 
\bibliography{main}

\newpage
\appendix

\section{Command Extension Values}
\label{ap:CEvalues}

Let $\mathcal{M} = (\mathcal{S},\mathcal{A},p_T,\mu_0,r)$ be an MDP and
$\bar{\mathcal{M}} = (\bar{\mathcal{S}}, \mathcal{A}, \bar{p}_T, \bar{r}, \bar{\mu}_0,\rho)$ be its CE. We aim to prove:
for all $(s,h,g) \in \bar{S}_T$ and all actions $a \in \mathcal {A}$ it holds:
$$
\begin{aligned}
Q^{\pi}((s,h,g),a) &=
\prob(\rho(S_{t+h})=g|S_{t}=s,H_{t}=h,G_{t}=g,A_t=a,\pi ), \\
V^{\pi}(s,h,g) &=
\prob (\rho(S_{t+h})=g|S_t=s,H_t=h,G_t=g,\pi ).
\end{aligned}
$$
The statement follows directly from the CE definition, we just have to realize that, due to the
deterministic dynamics in remaining horizon and the reward definition, there is a non-zero reward only after $h$ time-steps, where $h$ is the remaining horizon component of the CE state, i.e.,
$$
\begin{aligned}
Q^{\pi}((s,h,g),a)
&=
\ev [G_t \mid (S_t,H_t,G_t)=(s,h,g),A_t = a;\pi]
\\&=
\ev [ \sum_{k \in \mathbb{N}_0} R_{t+k+1} \mid (S_t,H_t,G_t)=(s,h,g),A_t = a;\pi]
\\&=
\ev [ \sum_{k \in \mathbb{N}_0} r((S_{t+k+1},H_{t+k+1},G_{t+k+1}),(S_{t+k},H_{t+k},G_{t+k}),A_{t+k}) \mid (S_t,H_t,G_t)=(s,h,g),A_t = a;\pi]
\\&=
\ev [ r((S_{t+h},H_{t+h},G_{t+h}),(S_{t+h-1},H_{t+h-1},G_{t+h-1}),A_{h-1}) \mid (S_t,H_t,G_t)=(s,h,g),A_t = a;\pi]
\\&=
\ev [ r((S_{t+h},0,g),(S_{t+h-1},1,g),A_{t+h-1}) \mid (S_t,H_t,G_t)=(s,h,g),A_t = a;\pi]
\\&=
\ev [ \indicator \{\rho(S_{t+h})=g\} \mid (S_t,H_t,G_t)=(s,h,g),A_t = a;\pi]
\\&=
\prob ( \rho(S_{t+h})=g \mid (S_t,H_t,G_t)=(s,h,g),A_t = a;\pi).
\end{aligned}
$$
Similarly for $V^{\pi}(s)$.

\section{Segment Distribution and its Factorization}
\label{ap:SegmentDist}
There is no need to include rewards in trajectories (or their segments)
because they are deterministic.
Since actions assigned by a policy $\pi$ on $S_A$ do not matter (they are no longer affecting transitions) we will
without loss of generality restrict to policies $\pi$, which generates just single action on $S_A$., i.e.
$\forall \bar{s} \in S_A:\:\pi(a_A|\bar{s}) = 1$, where $a_A \in \mathcal{A}$ is some
a priori fixed action which is common to all policies. In this section we assume that all trajectories are generated by using one fixed policy $\pi$.
The length $l(\cdot)$ of a trajectory is defined as the number of transitions
till an absorbing state is entered for the first time.
Regarding trajectories we can consider just prefixes of length $N$ because the
CE MDP bounds the remaining horizon component by $N$. Further we can
restrict to the subspace $\Traj \subset (\bar{\mathcal{S}}\times\mathcal{A})^N\times\bar{\mathcal{S}}$ allowed
by remaining horizon/goal dynamics of CE, i.e. for a trajectory
$\tau = (\bar{s}_0,a_0,\ldots,\bar{s}_N) \in \Traj$, $\bar{s}_t = (s_t,h_t,g_t)$, $0\leq t\leq N$
the remaining horizon $h_t$ decreases by 1
from its initial value $h_0=l(\tau)$ till 0 when entering to
an absorbing state. The goal $g_t=g_0$ remains in its initial value.
Thus $\tau$ is fully determined by the initial horizon $h_0=l(\tau)$, the initial goal $g_0$, the
states $s_0,\ldots,s_{l(\tau)}$ and actions $a_0,\ldots,a_{l(\tau)-1}$, i.e.
$\tau = ((s_0,h_0,g_0),a_0,(s_1,h_0-1,g_0),a_1,\ldots,(s_{l(\tau)},0,g_0),a_A,\ldots,(s_{l(\tau)},0,g_0))$.
The probability of $\tau$ is given as:
$$
\prob( \mathcal{T} = \tau;\pi)
=
\left( \prod_{t=1}^{l(\tau)}
p_T(s_t | a_{t-1}, s_{t-1})
\right)
\times
\left( \prod_{t=0}^{l(\tau)-1}
\pi(a_t|\bar{s}_t)
\right)
\times
\prob( G_0=g_0,H_0=h_0|S_0=s_0)
\mu_0(s_0),
$$
where $\mathcal{T}:\Omega\rightarrow\Traj$,$\mathcal{T} = ((S_0,H_0,G_0),A_0,\ldots,(S_N,H_N,G_N))$ is the trajectory random variable map.

\paragraph{Segment distribution}
Segments are assumed to be continuous chunks of trajectories in $\Traj$,
so they respect the CE horizon/goal dynamics.
We assume them to be always contained within the length of a trajectory.
Notice that \eUDRL{} is learning actions just in a state which happens to be the first state of a segment.
Since learning actions in absorbing states is not meaningful, we will assume the first state of a segment to be transient ($\in \bar{S}_T$)\footnote{It would be even better to additionally assume the original MDP component of this state to be transient for the similar reason.}.
Thus, segments $\sigma$ are fully determined by the
segment length, denoted by $l(\sigma)$ (number of transitions), by the
remaining horizon and goal at the segment beginning
$h_0^{\sigma}$,$g_0^{\sigma}$, by and $l(\sigma)+1$ states and $l(\sigma)$
actions. Without loss of generality we will identify $\sigma$ with such a tuple
$\sigma=(l(\sigma),s_0^{\sigma},h_0^{\sigma},g_0^{\sigma},a_0^{\sigma},s_1^{\sigma},a_1^{\sigma},\ldots,s_{l(\sigma)}^{\sigma})$.
The space of all such tuples will be denoted $\Seg$.
Formally, we will assume that there exists a random variable map $\Sigma: \Omega \rightarrow \Seg$, $\Sigma = (l(\Sigma),\stag{S}_0,\stag{H}_0,\stag{G}_0,\stag{A}_0,\stag{S}_1,\stag{A}_1,\ldots,\stag{S}_{l(\Sigma)})$
with distribution $d_{\Sigma}^{\pi}$ given
below.

We construct the segment distribution $d_{\Sigma}^{\pi}$ in a similar way as the state visitation distribution
--- summing across appropriate trajectory distribution marginals.
This causes the result to be un-normalized (among other restrictions, e.g., on the first state of the segment), therefore we have to include a normalization constant $c$. ($\forall \sigma \in \Seg$):
$$
\begin{aligned}
(d_{\Sigma}^{\pi_n}(\sigma) :=)\:
\prob(\Sigma=\sigma;\pi)
&= c\sum_{t \leq N -l(\sigma)}
\prob(S_t=s_0^{\sigma},H_t=h_0^{\sigma},G_t=g_0^{\sigma},A_t=a_0^{\sigma}, \ldots, S_{t+l(\sigma)}=s_{l(\sigma)}^{\sigma} ;\pi)
\\
&=
c
\sum_{t \leq N -l(\sigma)}
\left( \prod_{i=1}^{l(\sigma)}
p_T(s_{i}^{\sigma} | a_{i-1}^{\sigma}, s_{i-1}^{\sigma})
\right)
\times
\left( \prod_{i=0}^{l(\sigma)-1}
\pi(a_i^{\sigma}|\bar{s}_i^{\sigma})
\right)
\times
\prob(S_t=s_0^{\sigma},H_t=h_0^{\sigma},G_t = g_0^{\sigma};\pi)
\\
&=
c
\left( \prod_{i=1}^{l(\sigma)}
p_T(s_{i}^{\sigma} | a_{i-1}^{\sigma}, s_{i-1}^{\sigma})
\right)
\times
\left(\prod_{i=0}^{l(\sigma)-1}
\pi(a_i^{\sigma}|\bar{s}_i^{\sigma})
\right)
\times
\sum_{t \leq N -l(\sigma)}
\prob(S_t=s_0^{\sigma},H_t=h_0^{\sigma},G_t = g_0^{\sigma};\pi)
\end{aligned}
$$
Given the factorized form of the segment distribution we can conclude the following (through computing marginals and calculating conditional density ratios):
\small
\begin{equation*}
\prob(l(\Sigma)=k,\stag{S}_0=s_0,\stag{H}_0=h_0,\stag{G}_0=g;\pi)
=
c
\sum_{t \leq N -k}
\prob(S_t=s_0,H_t=h_0,G_t = g;\pi)
\end{equation*}

\begin{equation*}
\prob(\stag{A}_0=a_0,\stag{S}_1=s_1,\stag{A}_1=a_1,\ldots,\stag{S}_k=s_k
|\stag{S}_0=s_0,\stag{H}_0=h_0,\stag{G}_0=g_0,l(\Sigma)=k
;\pi)
=
\left( \prod_{i=1}^{k}
p_T(s_{i} | a_{i-1}, s_{i-1})
\right)
\times
\prod_{i=0}^{k-1}
\pi(a_i|\bar{s}_i)
\end{equation*}
and
$(\forall 0 \leq i \leq k)$:
\begin{align}
\prob(\stag{A}_i=a_i|\stag{S}_i=s_i,\ldots,\stag{S}_0=s_0,\stag{H}_0=h_0,\stag{G}_0=g,l(\Sigma)=k;\pi)
&=\pi(a_i|s_i,h_0-i,g)
\label{eq:segactions} \\
&=
\prob(\stag{A}_i=a_i|\stag{S}_i=s_i,\stag{H}_0=h_0,\stag{G}_0=g;\pi) \nonumber
\end{align}
\begin{align}
\prob(\stag{S}_i=s_i|\stag{S}_{i-1}=s_{i-1},\stag{A}_{i-1}=a_{i-1},\ldots,\stag{S}_0=s_0,\stag{H}_0=h_0,\stag{G}_0=g_0,l(\Sigma)=k;\pi)
&=
p_T(s_i|s_{i-1},a_{i-1})
\label{eq:segtransitions}
\\
&=
\prob(\stag{S}_i=s_i|\stag{S}_{i-1}=s_{i-1},\stag{A}_{i-1}=a_{i-1})
\nonumber
\end{align}
\normalsize
After defining $\stag{H}_i := \stag{H}_0-i,\stag{G}_i := \stag{G}_0$, we can write
$\prob(\stag{A}_i=a_i|\stag{S}_i=s_i,\stag{H}_i=h_i,\stag{G}_i=g_i;\pi)= \pi(a_i|s_i,h_i,g_i)
$.

\section{\eUDRL{} non-optimality in Stochastic Environments}
\label{ap:nonOpt}

\nonoptproof*
\begin{proof}
\label{ap:nonoptproof}
Fix an arbitrary $ 0 < \epsilon < 1$
so that $\frac{1-\epsilon}{\epsilon}(1-\delta) > 1$ (since $(1-\delta) > 0$ we can make $\frac{1-\epsilon}{\epsilon}$ arbitrarily large
by choosing $\epsilon >0$ arbitrarily small).
Assume that the sequence $(\pi_n)$ tends to the optimal policy
set.
Then there exists $n_0$ such that for all $n > n_0$ it holds:
\begin{align*}
\sum_{a\in M_0}\pi_{n+1}(a|s,1,g_0) &\geq 1-\epsilon&
\sum_{a\in \mathcal{A}\setminus M_0}\pi_{n+1}(a|s,1,g_0) &\leq \epsilon
\\
\sum_{a\in M_1}\pi_{n+1}(a|s,1,g_1) &\geq 1-\epsilon&
\sum_{a\in \mathcal{A}\setminus M_1}\pi_{n+1}(a|s,1,g_1) &\leq \epsilon
\end{align*}
From \eUDRL{} recursion rewrite \eqref{eq:recursion} we can bound $\pi_{n+1}$ terms from above and below using
$\pi_{A,n}$, $q_i$ and $\delta$ (for all $n>n_0$):
\begin{align*}
\sum_{a\in M_0} c_0 q_0 \pi_{A,n}(a|s,1) &\geq 1-\epsilon&
\sum_{a\in \mathcal{A}\setminus M_0} c_0 q_0(1-\delta) \pi_{A,n}(a|s,1) &\leq \epsilon
\\
\sum_{a\in M_1}c_1 q_0 \pi_{A,n}(a|s,1) &\geq 1-\epsilon&
\sum_{a\in \mathcal{A}\setminus M_1}c_1 q_0(1-\delta) \pi_{A,n}(a|s,1) &\leq \epsilon
\end{align*}
where we introduced $c_i^{-1} := \sum_{a\in\mathcal{A}} Q_{A}^{\pi_n,g_i}(s,1,a) \pi_{A,n} (a|s,1) $ the normalizing denominators of the recursion.
Further denote
$\beta_0 := \sum_{a\in M_0}\pi_{A,n}(a|s,1)$ and
$\beta_1 := \sum_{a\in M_1}\pi_{A,n}(a|s,1)$.
Notice that since
$M_0$ and $M_1$ are disjoint then $\sum_{a\in \mathcal{A}\setminus (M_0 \cup M_1)} \pi_{A,n}(a|s,1) = 1-\beta_0-\beta_1$.
We will assume $0<\beta_i<1$, $i=1,2$ with the licence that the edge cases are somewhat trivial,
e.g., $\beta_0 = 0$ makes the first equation from the last display-math impossible or
$\beta_0 = 1$ make $\beta_1 = 0$ (again through disjointness) and we can argument similarly etc..
Thus we get:
\begin{align*}
c_0 q_0 \beta_0 &\geq 1-\epsilon&
c_0 q_0(1-\delta) (1-\beta_0) &\leq \epsilon
\\
c_1 q_0 \beta_1 &\geq 1-\epsilon&
c_1 q_0(1-\delta) (1-\beta_1) &\leq \epsilon
\end{align*}
Dividing the equations in the first row and then in the second row we get:
\begin{align*}
\frac{\beta_0}{(1-\beta_0)} &\geq \frac{1-\epsilon}{\epsilon} (1-\delta)
\\
\frac{\beta_1}{(1-\beta_1)} &\geq \frac{1-\epsilon}{\epsilon}(1-\delta).
\end{align*}
Now from the definition of $\beta_0$ and $\beta_1$ and the disjointness of $M_0$ and $M_1$,
we have that $1-\beta_0 \geq \beta_1$ and  $\beta_0 \leq 1-\beta_1$, which gives:
$$
\frac{1-\beta_0}{\beta_0} \geq \frac{\beta_1}{1-\beta_1} \geq \frac{1-\epsilon}{\epsilon}(1-\delta).
$$
Thus, to sum up, it must hold at the same time:
$$
\frac{\beta_0}{1-\beta_0} \geq \frac{1-\epsilon}{\epsilon}(1-\delta)
\quad\quad
\left(\frac{\beta_0}{1-\beta_0}\right)^{-1} = \frac{1-\beta_0}{\beta_0} \geq \frac{1-\epsilon}{\epsilon}(1-\delta)
$$
But it is impossible, since we make $\frac{1-\epsilon}{\epsilon}(1-\delta) > 1$ a priori
and it is impossible
for a number $\frac{\beta_0}{1-\beta_0}$ and its inversion to both be greater then 1.
\end{proof}

\section{Notes on Demonstration}
\label{ap:demonotes}
In this note we aim to compute exact values of $\pi_n$ for the example used in the demonstration.
As already mentioned it suffices to evaluate policies just for $s=0$, $h=1$ (the only transient states of the CE). We will note also the variable names inside densities (in case they were substituted by some concrete values), so it is not necessary to always look for how we fixed individual variable positions.\footnote{
This is just to improve readability of this paragraph by helping a reader to keep track of substitutions. We use a different color (gray) for these extra notes.
}
The result is: $\pi_n(a|\subnote{s=}0,\subnote{h=}1,g) = p_T(g|a,\subnote{s=}0)$ for $n>0$.
This result follows directly from the recursion \eqref{eq:recursion}. From the note
before Lemma~\ref{le:nonoptproof} we already know that for horizon 1 (our case) it
holds $Q_{A}^{\pi_n,g} (\subnote{s=}0,\subnote{h=}1,a) = \sum_{s'\in\rho^{-1}(\{g\})} p_T(s'|a,\subnote{s=}0) = p_T(g|a,\subnote{s=}0)$
where we used $\mathcal{G} = \mathcal{S}$, $\rho = \mathrm{id}_{\mathcal{S}}$.
Since for fixed horizon 1 (our case) segments are just trajectories (we fixed the minimum length of segment to 1 because of the requirement  the first state to be transient), the distribution of the first CE state of a segment is the same initial distribution of the trajectory. Thus the following term
from \eqref{eq:apolicy} simplifies:
$\prob(\stag{H}_0=1,\stag{G}_0=g'| \stag{S}_0=0,l(\Sigma)=1 ; \pi_n )
= \prob(G_0=g') = 0.5$. Thus we get the following relation for the ``average" policy:
$\pi_{A,n}(a|\subnote{s=}0,\subnote{h=}1) = 0.5 \sum_{g' \in \mathcal{G}} \pi_n(a|\subnote{s=}0,\subnote{h'=}1,g') $.
Notice that the optimal policy $\pi^{*}$ is given by $\pi^{*}(a|\subnote{s=}0,\subnote{h=}1,g) = 1$ for $a=g$.
Notice that the optimal policy $\pi^{*}$, the
uniform policy (our only candidates for the initial condition $\pi_0$) and $p_T(g|a,\subnote{s=}0)$ (our candidate for $pi_n$, $n>0$) all have the following symmetry: $\exists \beta \in [0,1]$ so that $f(a,g) = \beta$ for $a=g$, and
$f(a,g) = 1-\beta$ otherwise.
Further notice that due to the symmetry the transition probability $p_T(g|a,\subnote{s=}0)$ is also a conditional probability distribution over actions given a goal $g$ (summing over actions gives 1).
Now it suffices to prove that if $\pi_n$ has the symmetry then $\pi_{n+1} = p_T$.
Thus assume $\pi_n$  has the symmetry
we get  $\pi_{A,n}(a|\subnote{s=}0,\subnote{h=}1) = 0.5 \sum_{g' \in \mathcal{G}} \pi_n(a|\subnote{s=}0,\subnote{h'=}1,g') =
0.5 (\beta + 1-\beta) = 0.5$.
Thus in \eqref{eq:recursion} we are just multiplying
$Q_{A}^{\pi_n,g} (\subnote{s=}0,\subnote{h=}1,a) = p_T(g|a,\subnote{s=}0)$ by a constant $\pi_{A,n}(a|\subnote{s=}0,\subnote{h=}1)=0.5$.
After normalization by denominator we must get back to $\pi_{n+1}(a|\subnote{s=}0,\subnote{h=}1,g) = p_T(g|a,\subnote{s=}0)$ (here we used the fact that $p_T$ is also conditional distribution over actions given a goal).

\end{document}